\def\eqref#1{equation~\ref{#1}}
\def\1{\bm{1}}
\DeclareMathAlphabet{\mathsfit}{\encodingdefault}{\sfdefault}{m}{sl}
\SetMathAlphabet{\mathsfit}{bold}{\encodingdefault}{\sfdefault}{bx}{n}
\newcommand{\E}{\mathbb{E}}
\newtheorem{theorem}{Theorem}[section]
\newcommand{\ours}{ROAM\xspace}
\titlespacing\subsection{0pt}{4pt plus 4pt minus 2pt}{2pt plus 2pt minus 2pt}
\newcommand{\D}{\mathcal{D}}
\renewcommand{\E}{\mathbb{E}}
\newcommand{\mc}{\mathcal}
\definecolor{mygreen}{rgb}{0.1, 0.6, 0.1}
\title{Adapt On-the-Go: Behavior Modulation for Single-Life Robot Deployment}
\author{%
  Annie S. Chen$^*$ \\
  Stanford University \\
  \And
  Govind Chada$^*$ \\
  Stanford University \\
  \And
  Laura Smith \\
  UC Berkeley \\
  \And
  Archit Sharma \\
  Stanford University \\
  \AND
  Zipeng Fu \\
  Stanford University \\
  \And
  Sergey Levine \\
  UC Berkeley \\
  \And
  Chelsea Finn \\
  Stanford University
}
\begin{document}

\maketitle

\begin{abstract}
To succeed in the real world, robots must cope with situations that differ from those seen during training. 
We study the problem of adapting on-the-fly to such novel scenarios during deployment, by drawing upon a diverse repertoire of previously-learned behaviors. 
Our approach, RObust Autonomous Modulation (\ours), introduces a mechanism based on the perceived value of pre-trained behaviors to select and adapt pre-trained behaviors to the situation at hand. 
Crucially, this adaptation process all happens within a single episode at test time, without any human supervision. 
We provide theoretical analysis of our selection mechanism and demonstrate that \ours enables a robot to adapt rapidly to changes in dynamics both in simulation and on a real Go1 quadruped, even successfully moving forward with roller skates on its feet.
Our approach adapts over 2x 
as efficiently compared to existing methods when facing a variety of out-of-distribution situations during deployment by effectively choosing and adapting relevant behaviors on-the-fly. 

\end{abstract}

\begin{figure*}[h!]
    \centering
    \includegraphics[width=0.8\textwidth]{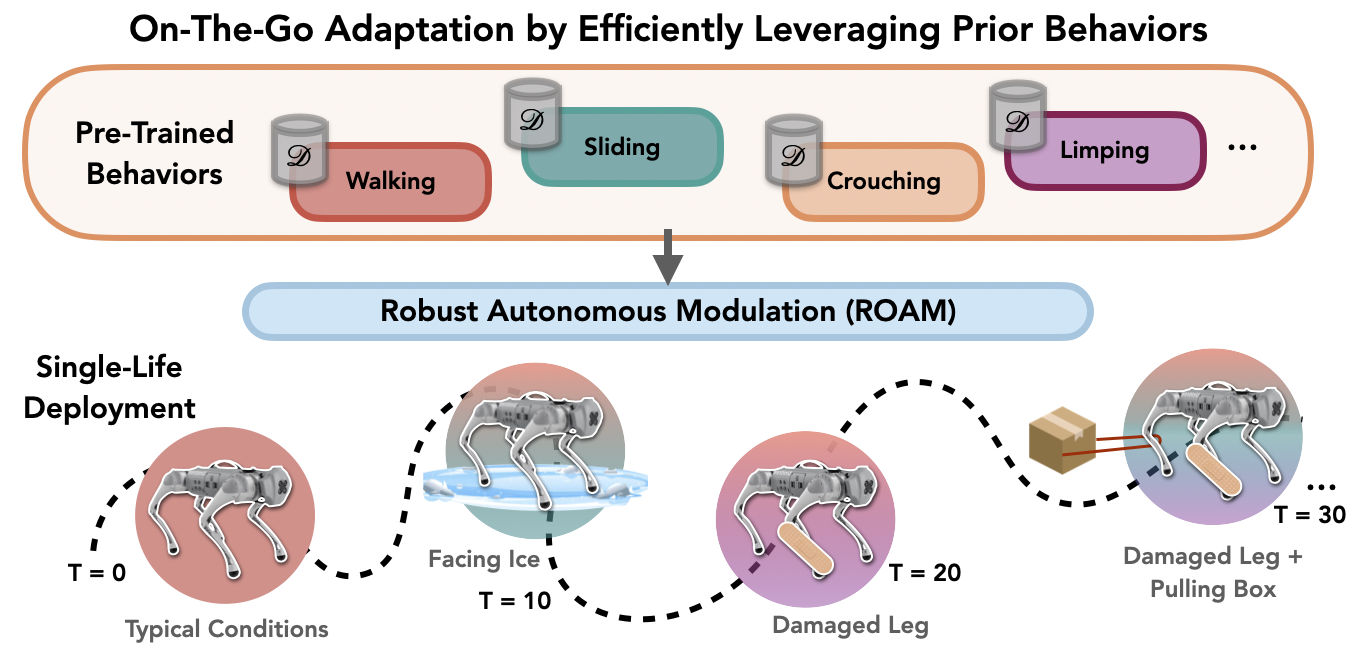}
    \caption{
        \small 
        \textbf{On-The-Go Adaptation via Robust Autonomous Modulation (ROAM).} 
        An agent will inevitably encounter a wide variety of situations during deployment, and handling such situations may require a variety of different behaviors. We propose Robust Autonomous Modulation (\ours), which dynamically employs  relevant behaviors as different situations arise during deployment. 
    }
    \vspace{-4mm}
    \label{fig:teaser}
\end{figure*}

\section{Introduction}
\label{sec:intro}

A major obstacle to the broad application of sequential decision-making agents is their inability to adapt to unexpected circumstances, which limits their uses largely to tightly controlled environments. 
Even equipped with prior experience and pre-training, agents will inevitably encounter out-of-distribution (OOD) situations at deployment time that may require a large amount of on-the-fly adaptation.
In this work, we aim to enable a robot to autonomously handle novel scenarios encountered during deployment, while drawing upon a diverse set of pre-trained behaviors that may improve its versatility. We hypothesize that by doing some of the adaptation in the space of pre-trained behaviors (rather than only in the space of parameters), we can much more quickly react and adapt to novel circumstances.
Accumulating a set of different behavior policies is a relatively straightforward task through online or offline episodic reinforcement learning, using different reward functions or skill discovery methods.
However, the on-the-fly selection and adaptation of these behaviors during deployment, particularly in novel environments, present significant challenges.
Consider tasking a quadrupedal robot that has acquired many different behaviors, e.g., walking, crouching, and limping via training in simulation, with a search-and-rescue mission in the real world. When deployed on this task with unstructured obstacles, the robot may bump into an obstacle and damage its leg, and it must be able to dynamically adapt its choice of behaviors to continue its mission with the damage.

Existing adaptation methods often operate within the standard, episodic training paradigm where the agent is assumed to be reset and have another chance to attempt the task each time~\citep{cully2015robots,song2020rapidly,julian2020never}.
However, these are idealized conditions that in practice often rely on human intervention. In the above search-and-rescue example, the robot's state cannot be arbitrarily restored; during deployment, the robot cannot be repaired, and it is not feasible for a human to fetch it if it gets stuck in a situation it is not equipped to handle. This situation necessitates adapting at test time both quickly and autonomously, to succeed at the task within a single episode. Therefore, we frame our problem setting as an instantiation of single-life deployment~\citep{chen2022you}, where the agent is given prior knowledge but evaluated on its ability to successfully complete a given task later in a `single-life' trial, during which there are no human-provided resets. The robot is provided with a diverse set of prior behaviors trained through episodic RL, 
and the single life poses a sequence of new situations.

To solve this problem, the robot must identify during deployment which behaviors are most suited to its situation at a given timestep and have the ability to fine-tune those behaviors in real time, as the pre-trained behaviors may not optimally accommodate new challenges. 
We introduce a simple method called RObust Autonomous Modulation (\ours), which foremost aims to quickly identify the most appropriate behavior from its pre-trained set at each point in time during single-life deployment. Rather than introducing an additional component like a high-level controller to select behaviors, we leverage the value function of each behavior. The value function already estimates whether each policy will be successful, but may not be accurate for states that were not encountered during training of that behavior. Therefore, prior to deployment, we fine-tune each behavior's value function with a regularized objective that encourages behavior identifiability: the regularizer is a behavior classification loss. Then, at each step during deployment, \ours samples a behavior proportional to its classification probability, executes an action from that behavior, and optionally fine-tunes the selected behavior for additional adaptation.

The main contribution of this paper is a simple algorithm for autonomous, deployment-time adaptation to novel scenarios. 
At a given state, with the additional cross-entropy regularizer, \ours can constrain each behavior’s value to
be lower than the value of behaviors for which that state appears more frequently. Consequently, our method incentivizes each behavior to differentiate between familiar and unfamiliar states, allowing \ours to better recognize when a behavior will be useful. 
We conduct experiments on both simulated locomotion tasks and on a real Go1 quadruped robot. 
In simulation, our method completes the deployment task more than two times faster on average than existing methods, including two prior methods designed for fast adaptation. We also empirically analyze how the additional cross-entropy term in the loss function of \ours contributes to more successful utilization of the prior behaviors.
Furthermore, \ours enables a Go1 robot to adapt on-the-go to various OOD situations without human interventions or supervision in the real world. With \ours, the robot can successfully pull heavy luggage, pull loads with dynamic weights, and even slide forward with two roller skates on its front feet, even though it never encountered loads or wore roller skates during training.

\section{Related Work}
\label{sec:related}
We consider the problem of enabling an agent to act robustly when transferring to unstructured test-time conditions that are unknown at train-time.
One common instantiation of this problem is transfer to different dynamics, e.g., in order to transfer policies trained in simulation to the real world. 
A popular approach in achieving transfer under dynamics shift is domain randomization, i.e., randomizing the dynamics during training~\citep{cutler2014reinforcement,rajeswaran2016epopt,sadeghi2016cad2rl,tobin2017domain,peng2018sim,tan2018sim,yu2019sim,akkaya2019solving,xie2021dynamics,margolis2022rapid,haarnoja2023learning} to learn a robust policy. 
Our approach is similar in that it takes advantage of different MDPs during training; however, a key component of \ours is to leverage and modulate diverse skills rather than a single, robust policy. 
We find in Section~\ref{sec:experiments} that challenging test-time scenarios may require distinct behaviors at different times, and we design our method to be robust to those heterogeneous conditions.

Another class of methods involve training policies that exhibit different behavior when conditioned on dynamics parameters, then distilling these policies into one that can be deployed in target domains where this information is not directly observable. The train-time supervision can come in the form of the parameter values~\citep{yu2017preparing,ji2022concurrent} or a learned representation of them~\citep{lee2020learning,kumar2021rma}. Thereafter, there are several ways prior work have explored utilizing online data to identify which behavior is appropriate on-the-fly, e.g., using search in latent space~\citep{yu2019sim,peng2020learning,yu2020learning}, or direct inference using proprioceptive history~\citep{lee2020learning,kumar2021rma,fu2023deep}, or prediction based on egocentric depth~\citep{miki2022learning,agarwal2023legged,zhuang2023robot,yang2023neural}.
In this work, we do not rely on domain-specific information nor external supervision for when particular pre-trained behaviors are useful. 
Moreover, in contrast to many of the above works, we focus on solving tasks that may be OOD for all prior behaviors individually. 

Meta-RL is another line of work that achieves rapid adaptation without privileged information by optimizing the adaptation procedure during training~\citep{wang2016learning,duan2016rl,finn2017model,nagabandi2018learning,houthooft2018evolved,rothfuss2018promp,rusu2018meta,mendonca2020meta,song2020rapidly} to be able to adapt quickly to a new situation at test-time. 
Meta-RL and the aforementioned domain randomization-based approaches entangle the training processes with the data collection, requiring a lot of \textit{online} samples that are collected in a particular way for pre-training.
The key conceptual difference in our approach is that \ours is \textit{agnostic to how the pre-trained policies and value functions are obtained}. 
Moreover, while meta-RL methods often use hundreds of pre-training tasks, or more, our approach can provide improvements in new situations even with a relatively small set of pre-trained behaviors (e.g. just 4 different behaviors improve performance in Section~\ref{sec:experiments}).
Other transfer learning approaches adapt the weights of the policy to a new environment or task, either through rapid zero-shot adaptation~\citep{hansen2020self, yoneda2021invariance,chen2022you} or through extended episodic online training~\citep{khetarpal2020towards,rusu2016progressive,eysenbach2020off,xie2020deep, xie2021lifelong}. 
Unlike these works, we focus on adaptation within a single episode to a variety of different situations.

Another rich body of work considers how to combine prior behaviors to solve long-horizon tasks, and some of these works also focus on learning or discovering useful skills \citep{gregor2016variational,achiam2018variational,eysenbach2018diversity,nachum2018near,sharma2019dynamics,baumli2021relative,laskin2022cic,park2023predictable}. Many of these methods involve training a high-level policy that learns to compose learned skills into long-horizon behaviors \citep{bacon2017option,peng2019mcp,lee2019learning,sharma2020learning,strudel2020learning,nachum2018data,chitnis2020efficient,pertsch2021guided,dalal2021accelerating,nasiriany2022augmenting}. We show in Section~\ref{sec:experiments} that such a high-level policy may not be effective for on-the-go behavior selection. 
Moreover, our work does not focus on where the behaviors come from -- they could be produced by these prior methods, or their rewards could be specified manually. Instead, we focus on quickly selecting and adapting the most suitable skill in an OOD scenario, without requiring an additional online training phase to learn a hierarchical controller.

\section{Preliminaries}
\label{sec:prelim}
In this section, we describe some preliminaries and formalize our problem statement. 
We are given a set of $n$ prior behaviors,
where each behavior $b_i$ is trained through episodic RL for a particular MDP $\mc{M}_i = (\mc{S}, \mc{A}, \mc{P}_i, \mc{R}_i, \rho_0, \gamma)$ where $\mc{S}$ is the state space, $\mc{A}$ is the agent's action space, $\mc{P}_i(s_{t+1} | s_t, a_t)$ represents the environment's transition dynamics, $\mc{R}_i: \mc{S} \rightarrow \mathbb{R}$ indicates the reward function, $\rho_0: \mc{S} \rightarrow \mathbb{R}$ denotes the initial state distribution, and $\gamma \in [0, 1)$ denotes the discount factor. Each of the $n$ MDPs $\mc{M}_i$ has potentially different dynamics and reward functions $\mc{P}_i$ and $\mc{R}_i$, often leading to different state visitation distributions. 
Each behavior corresponds to a policy $\pi_i$ and a value function $V_i$ as well as a buffer of trajectories $\tau \in \mc{D}_i$ collected during this prior training and relabeled with the reward $\mc{R}_{\text{target}}$ from the target MDP.
At test time, the agent interacts with a target MDP defined by $\mc{M}_{\text{target}} = (\mc{S}, \mc{A}, \mathcal{P}_{\text{target}}, \mc{R}_{\text{target}}, \rho_0, \gamma)$, which presents an aspect of novelty not present in any of the prior MDPs, in the form of new dynamics $\mathcal{P}_{\text{target}}(s_{t+1} \mid s_t, a_t)$, which may \textit{change over the course of the test-time trial}. 
We operate in a single-life deployment setting~\citep{chen2022you} that aims to maximize $J = \sum_{t=0}^h \gamma^t \mc{R}(s_t)$, 
where $h$ is the trial horizon, which may be $\infty$. 
The agent needs to complete the desired task in this target MDP in a single life without any additional supervision or human intervention by effectively selecting and adapting the prior behaviors to the situation at hand.

Off-policy reinforcement learning (RL) algorithms train a parametric Q-function, represented as $Q_\theta(s, a)$, via iterative applications of the Bellman optimality operator, expressed as 
\[ B^* Q(s, a) = r(s, a) + \gamma \mathbb{E}_{s' \sim P(s'|s, a)} \left[ \max_{a'} Q(s', a') \right]. \]
In actor-critic frameworks, a separate policy is trained to maximize Q-values. These algorithms alternate between policy evaluation, which involves the Bellman operator $ B^\pi Q = r + \gamma P^\pi Q$, and policy improvement, where the policy $\pi(a|s)$ is updated to maximize expected Q-values. We use a state-of-the-art off-policy actor-critic algorithm RLPD \citep{ball2023rlpd} as our base algorithm for pre-training and fine-tuning, which builds on regularized soft actor-critic \citep{haarnoja2018soft}.

\begin{figure*}[t!]
    \centering
    \includegraphics[width=1.0\textwidth]{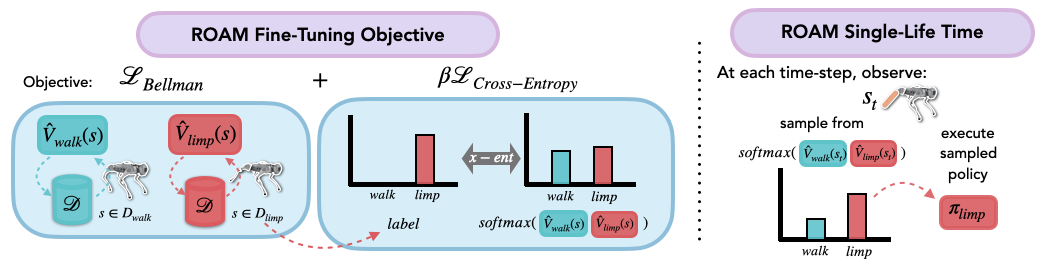}
    \caption{\small \textbf{Robust Autonomous Modulation (ROAM).} During the initial fine-tuning phase of \ours, we fine-tune each behavior using its existing data buffer and standard Bellman error, with an additional cross-entropy loss between the softmax values of all behaviors and the behavior index (as the label) from which the state was visited. Then at test-time, at each time-step, we sample from the softmax distribution of the behaviors' values at the current state and execute the sampled policy.}
    \label{fig:method}
    \vspace{-5mm}
\end{figure*}

\section{Robust Autonomous Modulation}
\label{sec:method}

We now present our method, Robust Autonomous Modulation (\ours), which fine-tunes value functions with an additional loss and provides a mechanism for choosing among them, so that at deployment time, the agent can quickly react to its current situation \textit{at every timestep} by honing in on the most appropriate behavior from its prior behaviors.
Our key observation is that with proper regularization, value functions provide a good indication of how well different behaviors will perform, so we can leverage them to quickly identify appropriate behaviors, which circumvents the need to learn a separate meta-controller or adaptation module.

\paragraph{Behavior Modulation using Value Functions.} 

The core idea of our method is to directly utilize the expressive power of value functions for adaptive behavior selection, as they inherently contain detailed information about potential rewards associated with different states for each behavior. 
We propose to use value functions of the behaviors to select the most appropriate behavior for a given state---namely, at each timestep during deployment, choosing one of the behaviors with the high values at that state. 
Since we already have access to value functions from pre-training the behaviors, this approach does not require any additional training or data collection.
Using value functions as the proxy selector also gives much more versatility to the selection mechanism, which can be flexibly controlled on the go by updating the value functions of different behaviors. 
However, naively using the pre-trained value functions may not lead to high-reward behaviors, due to overestimation of the value functions on new states, as the pre-trained Q-functions may not generalize well to OOD situations.
Recent studies in offline RL, for example, have observed an overestimation bias~\citep{levine2020offline} due to OOD training-time actions, which can lead to poor performance when deploying the policy in a new environment.
To mitigate these issues, works in offline RL have proposed a number of various modifications aimed at regularizing either the policy or the value function \citep{kumar2020conservative,yu2020mopo,yu2021combo}.
Although our setting is different, as we deal with OOD states, we face a similar problem of poor generalization of the value functions of the prior behaviors to unfamiliar situations.
In the following section, we describe how we can conservatively regularize the value functions to improve their generalization.

\paragraph{Fine-Tuning Value Functions with \ours.}

We desire value functions that accurately reflect the expected reward of using a behavior at a given state. 
Thus, we would like to fine-tune the value functions of the behaviors to minimize overestimation of the values at states for which a different behavior is more suitable. 
We can do so by incentivizing the value functions of the behaviors to be higher for states that are visited by that behavior and lower for states visited by other behaviors.

Given a set of prior behaviors $\mathcal{B}$ with policies $\pi_i$ and critics $Q_i$, we first fine-tune the value functions of the behaviors with an additional cross-entropy loss on top of the Bellman error that takes in the values at a given state as the logits. Values at a given state $s$ for behavior $b_i$ are obtained by averaging $Q_i(s, a)$ over $N=5$ sampled actions $a \sim \pi_i(\cdot \mid s)$.
More formally, with each prior data buffer $\mc{D}_i$, we fine-tune the critic $Q_i(s, a)$ of each behavior $b_i$ with the following update:
\begin{equation}
    \begin{split}
        \mc{L}_{\text{fine-tune}} &= (1 - \beta) \mc{L}_{\text{Bellman}} + \beta \mc{L}_{\text{cross-entropy}} \\
        &= (1 - \beta) \sum_i \E_{s, a, s'\sim \mc{D}_i} \left[ \left( r(s, a) \right. \right.
         + \left. \left. \gamma \E_{a' \sim \pi_i(a' | s')} Q_i(s', a') - Q_i(s, a) \right)^2 \right] \\
        &\quad + \beta \sum_{j} \E_{s \sim \D_j} \left[ -\log \frac{\exp(V_j(s))}{\sum_{k=1}^n \exp(V_k(s))} \right],
    \end{split}
    \label{eq:pretrain}
\end{equation}
where $0 < \beta < 1$ is a hyperparameter, $V_i(s) = \E_{a \sim \pi_i(a|s)}[Q_i(s, a)]$ is the average value of behavior $b_i$ at state $s$, and 
$\D_i$ is a replay buffer collected by behavior $b_i$.
Consider the derivative of the cross-entropy term with respect to the value functions $V_i(s)$ and $V_k(s)$, where $i \neq k$, for a state $s$ in $\D_i$:
\[ \frac{\partial{\mc{L}_{\text{cross-entropy}}}}{\partial{{V_i(s)}}} = \frac{\exp(V_i(s))}{\sum_{j=1}^n \exp(V_j(s))} - 1 < 0,
\frac{\partial{\mc{L}_{\text{cross-entropy}}}}{\partial{V_k(s)}} = \frac{\exp(V_k(s))}{\sum_{j=1}^n \exp(V_j(s))} > 0.\]
So when minimizing the cross-entropy loss, the value function $V_i(s)$ will be pushed up (since its derivative is negative), and $V_k(s)$ for $k \neq i$ will be pushed down.
Thus, the cross-entropy loss term in Equation~\ref{eq:pretrain} pushes up the value functions of the behaviors for states that are visited by that behavior and pushes down for states that are visited by other behaviors.
The value functions are then less likely to overestimate at OOD states, enabling the behaviors to specialize in different parts of the state space, which will help us at test time to better infer an appropriate behavior from the current state.

\paragraph{Full Procedure and Single-Life Deployment.}

To summarize the full procedure of \ours, we are given a set of policies $\pi_i$ and critics $Q_i$, and a set of prior data buffers $\mc{D}_i$ for each behavior, which have been relabeled with the target MDP reward.
Alternatively, this can be relaxed and we can assume that we are given a set of prior data buffers $\mc{D}_i$ for each behavior, and we can train the policies $\pi_i$ and critics $Q_i$ using these buffers with offline RL.

We then fine-tune the value functions of the behaviors, \( V_{i, \text{orig}} \), by incorporating the additional cross-entropy loss term from Equation~\ref{eq:pretrain}, yielding updated value functions $V_i$ for behavior selection. We retain copies of the original value functions to ensure that, if we fine-tune the policies at test time, they remain unaffected by the newly learned value functions. This prevents the ROAM cross-entropy loss from influencing policy optimization, ensuring that the policy updates rely only on their original training objectives.

At each timestep during test time, a behavior is sampled from a softmax distribution over the behaviors' values \( V_i \) at the current state. Specifically, given the current state \( s \), an action \( a_i \sim \pi_i(a|s) \) is drawn from behavior \( b_i \), with selection probability proportional to \( \exp(V_i(s)) \). The transition \( (s_t, a_t, r_t, s_{t+1}) \) is then stored in the online buffer \( \mathcal{D}^i_{\text{online}} \) for behavior \( b_i \), and both the critic \( V_i \), policy \( \pi_i \), and the original value function \( V_{i, \text{orig}} \) are fine-tuned using data from \( \mathcal{D}^i_{\text{online}} \). Importantly, the fine-tuned value functions influence only the selection of behaviors and do not modify the policies themselves, ensuring that policy optimization remains unbiased during test-time adaptation.

The ROAM fine-tuning objective and single-life deployment are depicted in Figure~\ref{fig:method} and the full algorithm is summarized in Algorithms~\ref{algoblock1} and \ref{algoblock2} in Appendix~\ref{sec:app-method}.
In the next subsection, we additionally provide theoretical analysis of \ours to show that the additional cross-entropy loss in \ours will lead to more suitable behaviors being chosen.

\subsection{Theoretical Analysis}
\label{sec:analysis}

Next, we theoretically analyze \ours to show that the additional cross-entropy loss in \ours will lead to more suitable behaviors being chosen at each timestep. 
In particular, \ours rescales the value functions of the behaviors so that they are less likely to overestimate in states that are out of distribution for that behavior.
Our main result, given in Theorem~\ref{thm:main},
is that with \ours, for some weight $\beta > 0$ on the cross-entropy term, at a given state, \ours constrains each behavior's value to be lower than the value of behaviors for which that state appears more frequently. 
This theorem gives us \textit{conservative generalization} by reducing value overestimation in unfamiliar states--specifically, our chosen behavior will not have worse performance than the most familiar behavior. 

This theoretical result holds for all local extrema. Under perfect optimization, ROAM will reach a local minimum at convergence and the result would hold in this case. Our analysis implicitly assumes a simplified tabular setting in the absence of approximation and sampling errors, as done in prior theoretical analysis of deep RL algorithms, e.g. some of the analysis in \citep{kumar2020conservative}. We take the full gradient of the TD error in the proof although in practice, only the semi-gradient is used during optimization. Regardless, the optimal value function is still a minimum of $\mc{L}_{\text{fine-tune}}$ and should satisfy the Bellman equation in the loss function at convergence. We also assume that for all states $s$, we have $V_{\text{freq}}(s) - V_i(s) =\left( R_{\text{freq}}(s) + \gamma E_{s'} V_{\text{freq}}(s')\right) - \left( R_i(s) + \gamma E_{s'} V_i(s')\right) < \eta < \infty$. 
A full proof of the statement in this section is presented in Appendix~\ref{sec:app-theory}.

\begin{theorem}
    Let $p_i(s)$ denote the state visitation probability for a behavior $b_i$ at state $s$. For any state $s$ that is out of distribution for behavior $b_i$ and is in distribution for another behavior $b_j$, i.e. $p_i(s) \ll p_j(s)$, if $1 > \beta > 0$ is chosen to be large enough, then the value of behavior $b_i$ learned by \ours will be bounded above compared to value of behavior $b_j$, i.e.  $V_i(s) \leq V_j(s)$.
    \label{thm:main}
\end{theorem}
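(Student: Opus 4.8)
The plan is to exploit the fact that, in the tabular full-gradient setting described above, any local extremum of $\mc{L}_{\text{fine-tune}}$ must satisfy a stationarity condition in which the cross-entropy gradient and the scaled Bellman gradient exactly cancel. First I would write, for the fixed out-of-distribution state $s$ and each behavior $b_k$, the condition $(1-\beta)\,\partial_{V_k(s)}\mc{L}_{\text{Bellman}} + \beta\,\partial_{V_k(s)}\mc{L}_{\text{cross-entropy}} = 0$. Aggregating the per-state cross-entropy derivatives already computed in the text over the buffers $\D_j$ (so that state $s$ is weighted by its visitation mass $p_j(s)$ in each buffer), the cross-entropy gradient at $V_k(s)$ collapses to $q_k(s)\sum_l p_l(s) - p_k(s)$, where $q_k(s) := \exp(V_k(s))/\sum_m \exp(V_m(s))$ is the softmax weight. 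This is the familiar statement that minimizing cross-entropy drives the softmax toward the empirical label frequencies, which here are the visitation frequencies.

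The key reduction is the identity $V_i(s)-V_j(s) = \log\!\big(q_i(s)/q_j(s)\big)$, which holds by definition of the softmax; hence it suffices to prove $q_i(s) \le q_j(s)$. Subtracting the stationarity conditions for $b_i$ and $b_j$ and solving gives
\[ q_i(s) - q_j(s) = \frac{p_i(s)-p_j(s)}{\sum_l p_l(s)} \;-\; \frac{1-\beta}{\beta}\cdot\frac{\partial_{V_i(s)}\mc{L}_{\text{Bellman}} - \partial_{V_j(s)}\mc{L}_{\text{Bellman}}}{\sum_l p_l(s)}. \]
Because $p_i(s) \ll p_j(s)$, the first term is strictly negative and bounded away from zero by $(p_j(s)-p_i(s))/\sum_l p_l(s)$. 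The strategy is then to show the second term can be made arbitrarily small by taking $\beta$ close enough to $1$: as $\beta \to 1$ the prefactor $(1-\beta)/\beta \to 0$, so provided the Bellman-gradient difference is bounded uniformly (independently of $\beta$), the second term vanishes, leaving $q_i(s) - q_j(s) < 0$.

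The main obstacle is controlling that Bellman-gradient difference at an arbitrary critical point. Here I would invoke the stated bound $V_{\text{freq}}(s) - V_i(s) = \big(R_{\text{freq}}(s)+\gamma\E_{s'}V_{\text{freq}}(s')\big) - \big(R_i(s)+\gamma\E_{s'}V_i(s')\big) < \eta$: because the full gradient of the squared temporal-difference error is linear in the TD residuals $R_k(s)+\gamma\E_{s'}V_k(s') - V_k(s)$ and their discounted propagation to predecessor states, this finite value-gap assumption bounds $\lvert\partial_{V_i(s)}\mc{L}_{\text{Bellman}} - \partial_{V_j(s)}\mc{L}_{\text{Bellman}}\rvert$ by a constant $C(\eta,\gamma)$ independent of $\beta$. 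Choosing $\beta$ large enough that $\tfrac{1-\beta}{\beta}\,C(\eta,\gamma) < p_j(s)-p_i(s)$ then forces $q_i(s) < q_j(s)$, and the softmax identity yields $V_i(s) \le V_j(s)$, as claimed. I would finish by noting that, since the $\eta$ bound is assumed for all states, the threshold on $\beta$ can be taken uniformly over the relevant out-of-distribution states, so a single $\beta<1$ suffices.
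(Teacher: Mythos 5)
Your algebra through the decomposition of $q_i(s)-q_j(s)$ is correct (stationarity, the aggregated cross-entropy gradient $q_k(s)\sum_l p_l(s)-p_k(s)$, and the softmax identity all check out), and your route is genuinely different from the paper's. The paper solves the stationarity condition for $V_i(s)$ itself, obtaining a Bellman backup plus a $\tfrac{\beta}{1-\beta}$-scaled correction, and compares $V_j(s)-V_i(s)$ directly; to get the sign it must assert, without proof, that $\sum_k \tfrac{p_k(s)}{p_i(s)}\cdot\tfrac{\exp(V_i(s))}{\sum_m \exp(V_m(s))} \gg 1$ at the critical point, and it then lets $\tfrac{\beta}{1-\beta}$ blow up so that this positive correction dominates the $\eta$-bounded backup difference. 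Your reduction via $V_i(s)-V_j(s)=\log\bigl(q_i(s)/q_j(s)\bigr)$ is cleaner on exactly this point: in your decomposition the sign comes from the visitation gap $(p_i(s)-p_j(s))/\sum_l p_l(s)<0$, which follows directly from the hypothesis $p_i(s)\ll p_j(s)$, and the Bellman contribution enters with a vanishing factor $\tfrac{1-\beta}{\beta}$ rather than having to be out-raced by a diverging one.

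The gap is in your last step: the claim that the paper's assumption (the backup difference, equal to the value gap, is $<\eta$) yields a bound $C(\eta,\gamma)$ on $\lvert\partial_{V_i(s)}\mc{L}_{\text{Bellman}}-\partial_{V_j(s)}\mc{L}_{\text{Bellman}}\rvert$ that is \emph{independent of $\beta$}. That assumption controls differences of Bellman backups across behaviors; the Bellman gradients are instead proportional to the individual TD residuals, $\partial_{V_k(s)}\mc{L}_{\text{Bellman}} = 2p_k(s)\bigl(\gamma P_k(s|s,a)-1\bigr)\delta_k(s)$ with $\delta_k(s)=R_k(s)+\gamma\E_{s'}V_k(s')-V_k(s)$, and nothing in the assumption bounds these residuals. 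Moreover they cannot be treated as $\beta$-free constants: at any critical point your own stationarity condition gives
\[
2(1-\beta)\,p_k(s)\bigl(1-\gamma P_k(s|s,a)\bigr)\,\delta_k(s) \;=\; \beta\Bigl(q_k(s)\textstyle\sum_l p_l(s) - p_k(s)\Bigr),
\]
so $\delta_k(s)$ equals $\tfrac{\beta}{1-\beta}$ times a factor that tends to zero as the critical point is driven toward the cross-entropy optimum (which is what happens as $\beta\to 1$) --- an indeterminate form whose boundedness is precisely what needs to be established. If the gradient difference grows like $\tfrac{\beta}{1-\beta}$ times a non-vanishing quantity, the term you intend to suppress does not vanish, and ``take $\beta$ close enough to $1$'' becomes circular: the admissible $\beta$ depends on a bound that itself depends on $\beta$. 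To close the argument you need either to assume outright that TD residuals (hence Bellman gradients) at critical points are bounded uniformly in $\beta$ --- the honest analogue of what the paper assumes for exactly the quantity its own proof needs --- or to prove that critical points converge as $\beta\to 1$ to a cross-entropy optimum at which the residuals are finite. As written, the uniform constant $C(\eta,\gamma)$ does not follow from the stated assumption.
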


As a result, for any states $s$ that are out of distribution for behavior $b_i$ and in distribution for a different behavior $b_j$, if we choose $\beta$ large enough, the value learned will not overestimate the value compared to behavior $b_j$.
Thus, at each time step, if one or more behaviors are familiar with the current state, the performance of the chosen behavior will not be much worse than its value function estimate. In this manner, \ours adjusts value estimates based on degree of familiarity, mitigating overestimation risks.
 The ability to adjust the $\beta$ parameter offers a flexible framework to optimize for the behavior with the highest value at a given state, which will be at least as suitable as the most familiar behavior.

In the next section, we find empirically that after fine-tuning with the additional cross-entropy loss, \ours is able to effectively select a relevant behavior for a given state on-the-fly, leading to robust and fast adaptation to OOD situations.

\section{Experimental Results}
\label{sec:experiments}

\begin{figure*}
    \centering
    \includegraphics[width=0.9\textwidth]{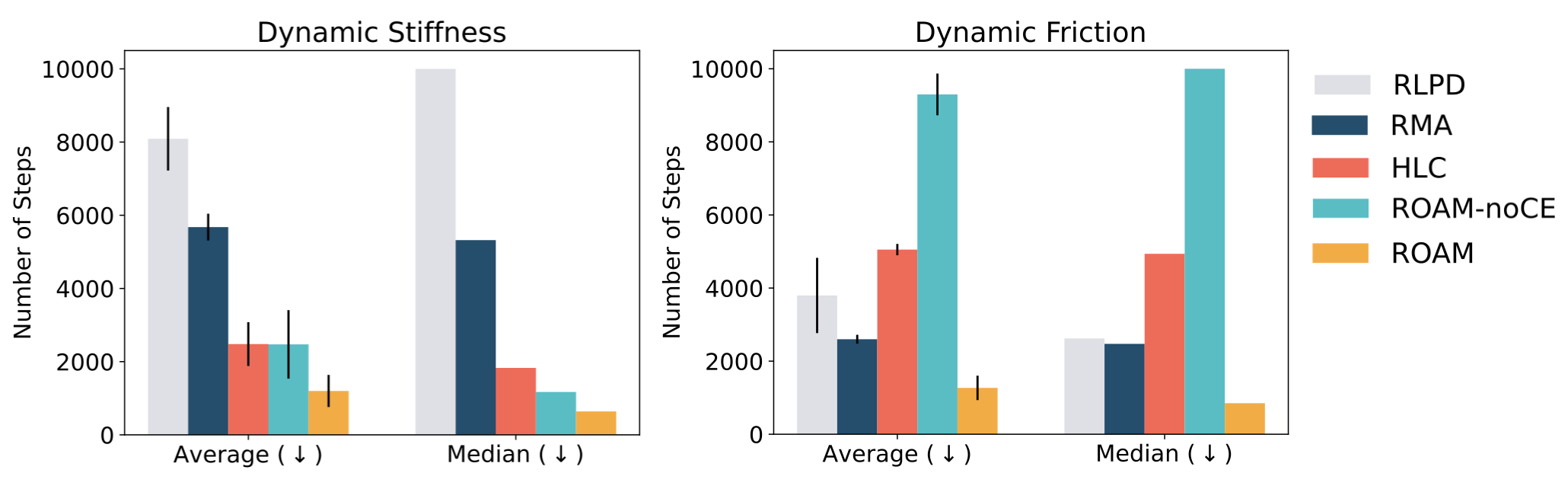}
    \caption{\small \textbf{Results on the simulated Go1 robot.} In both evaluation settings, \ours is over 2x as efficient as all comparisons in terms of both average and median number of steps taken to complete the task. }
    \label{fig:sim_results}
    \vspace{-4mm}
\end{figure*}

In this section, we evaluate the performance of \ours empirically and assess how effectively it can adapt on-the-fly to new situations.  
Concretely, we aim to answer the following questions: (1) In simulated and real-world settings, how does \ours compare to existing methods when given diverse prior behaviors/data and deployed in novel situations?
(2) How does the additional cross-entropy term in the loss function of \ours contribute to more successful utilization of the prior behaviors?
In the remainder of this section, we describe our experimental setup and present our results on both a simulated and a real-world Go1 quadrupedal robot. For qualitative video results, see our project webpage: 
{\footnotesize \url{https://sites.google.com/view/adapt-on-the-go/home}}.

\textbf{General Experimental Setup.}
We use the setting of locomotion deployment to evaluate \ours, as it is a challenging setting for adaptation, where the agent may naturally face a variety of different situations and must adapt its walking behavior on-the-fly without any additional supervision or human intervention.
We use a Go1 quadruped robot from Unitree and MuJoCo~\citep{todorov2012mujoco} for simulation. 
We implemented all methods on top of the same state-of-the art implementation of SAC from \citep{smith2022walk} as the base learning approach, with regularization additions following DroQ~\citep{hiraoka2021dropout}, and RLPD~\citep{ball2023rlpd} for methods that do online fine-tuning. 

\textbf{Comparisons.} We evaluate \ours along with the following prior methods: 
(1) RLPD Fine-tuning \citep{ball2023rlpd}, where we fine-tune a single policy using all the data from the prior behaviors with RLPD; 
(2) RMA \citep{kumar2021rma}, which trains a base policy and adaptation module that estimates environment info;
(3) High-level Classifier (HLC), which trains a classifier on the data buffers of the pre-trained behaviors and uses it to select which behavior to use at a given state, as a representative method for those that train an additional behavior selection network, similar to work by \citep{han2023lifelike}.
We additionally consider an ablation, \ours-NoCE, which uses the values of the prior behaviors to choose among behaviors but does not fine-tune with the additional cross-entropy loss.
We give RMA access to unlimited online episode rollouts in each of the prior MDPs during pre-training, while all other methods use the same set of offline data and prior behaviors that are pre-trained in the prior MDPs. 

\subsection{Selecting Relevant Behaviors in Simulation}

\textbf{Setup.} In our simulation experiments, we evaluate in two separate settings. The first setting simulates a situation where different joints become damaged or stuck during the robot's lifetime.
It uses 9 prior behaviors: each is a different limping behavior with a different joint frozen.
In the single life, the agent must walk a total distance of 10 meters, and every 100 steps, one of the 3 remaining joints \emph{not covered in the prior data} is frozen, and the agent must adapt its walking behavior to continue walking.
The second setting simulates a situation where the robot encounters different friction levels on its different feet due to variation in terrain. It uses 4 different prior behaviors, each of which is trained with one of the 4 feet having low friction. During the single life, every 50 steps, the friction of one or two of the feet is changed to be lower than in the prior behaviors.
To collect the prior behaviors, we train each behavior for 250k steps (first setting) or 50k steps (second setting) in the corresponding MDP, and use the last 40k steps as the prior data.
The agent is given a maximum of 10k steps to complete the task, and if it does not complete the task within this time, it is considered an unsuccessful trial.

\textbf{Results.} As seen in Figure~\ref{fig:sim_results}, \ours outperforms all other methods in all three metrics of average and median number of steps taken to complete the task.
In particular, in both settings, \ours completes the task \textit{more than 2 times faster}, in terms of average number of timesteps, compared to the next best method.
Both RLPD fine-tuning and RMA struggle on both evaluation settings, especially the stiffness setting, demonstrating the importance of adapting in the space of behaviors rather than the space of actions for more efficient adaptation.
RLPD and RMA perform better in the friction evaluation, where 
a single policy can still somewhat adapt to the various situations in the single life.
On the other hand, HLC and \ours-NoCE both struggle in the friction eval suite, demonstrating the importance of the additional cross-entropy term in the loss function, encouraging greater behavior specialization in different regions of the state space.
These two methods perform better when the behaviors are already more distinguished, as in the limping eval suite, but they still struggle to adapt as efficiently as \ours.

\begin{figure*}[t]
    \centering
    \begin{minipage}{0.485\textwidth}
        \centering
        \includegraphics[width=0.75\linewidth]{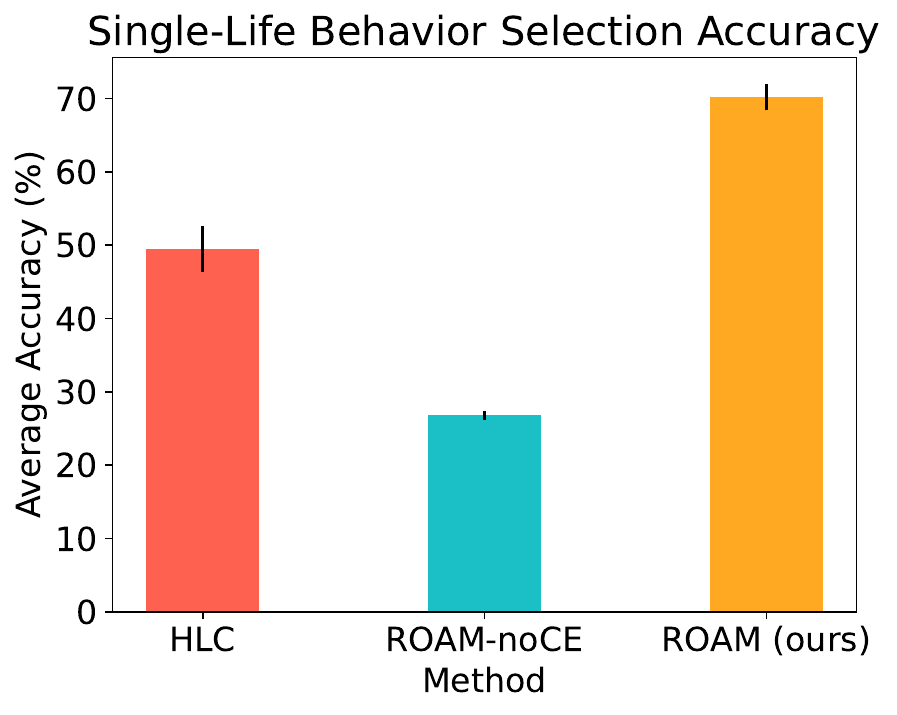}
        \caption{\small \textbf{Single-life Behavior Selection Accuracy.} The percent of steps where different methods select a relevant behavior for the current situation. 
        \ours is able to choose a relevant behavior significantly more often on average when adapting to test-time situations than HLC and \ours-noCE.}
        \label{fig:sub1}
    \end{minipage}
    \hfill
    \begin{minipage}{0.485\textwidth}
        \centering
        \includegraphics[width=0.95\linewidth]{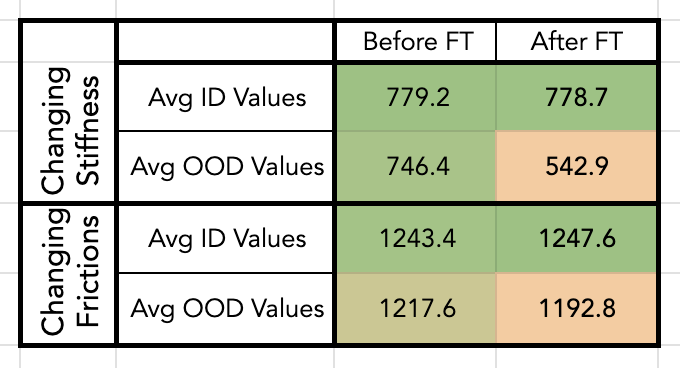}
        \caption{\small \textbf{Effect of the Cross-Entropy Loss on ID and OOD values.} The average values of the different behaviors in states visited by that behavior (ID) vs states visited by other behaviors (OOD), before and after fine-tuning with the additional cross-entropy term for \ours.
        \ours is able to maintain high values of the behavior in ID states, while decreasing the value of the behavior in OOD states.}
        \label{fig:sub2}
    \end{minipage}
\end{figure*}

In Figures~\ref{fig:sub1} and ~\ref{fig:sub2}, we provide some additional empirical analysis of \ours. First, in the stiffness evaluation, 
we plot the percent of steps where different methods select a relevant behavior during test-time deployment, where a held-out joint is frozen and a relevant behavior is one where an adjacent joint on the same leg is frozen or the same joint on an adjacent leg is frozen, and we see that \ours is able to choose the most relevant behavior on average significantly more frequently than HLC and \ours-noCE, which often select a behavior that is not relevant to the current situation.
In addition, we record the average values of the different behaviors in states visited by that behavior (ID) vs states visited by other behaviors (OOD), before and after fine-tuning with the additional cross-entropy term for \ours.
We find that \ours is able to effectively maintain high values of the behavior in familiar states, while decreasing the value of the behavior in unfamiliar OOD states.

\subsection{Adapting on-the-Go1 in the Real World}

\begin{figure*}[t]
    \centering
    \includegraphics[width=0.8\textwidth]{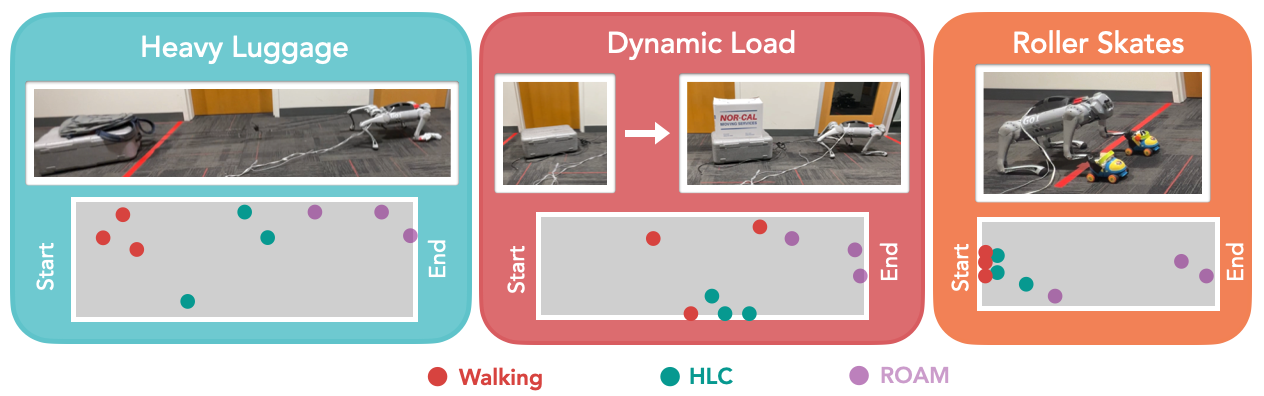}
    \caption{\small \textbf{Real-world single-life tasks.} We evaluate on: (1) pulling a load of heavy luggage 6.2 kg (13.6 lb), (2) pulling luggage where the weight dynamically changes between 2.36 kg (5.2 lb) and 4.2 kg (9.2 lb), and (3) moving forward with roller skates on the robot's front two feet. 
    For each trial, for each method, we also show the locations of the robot before its first fall or readjustment (Red is Walking, Blue is HLC, and Purple is \ours).}
    \label{fig:real-tasks}
\end{figure*}

\begin{table*}[t]
    \centering
    \adjustbox{max width=0.99\textwidth}{%
    \begin{tabular}{l|cc|cc|cc}
        \toprule
         & \multicolumn{2}{c}{Heavy Luggage} & \multicolumn{2}{c}{Dynamic Luggage Load} & \multicolumn{2}{c}{Roller Skates} \\
        \midrule
         & Avg. Time (s) $\downarrow$ & Falls $\downarrow$ & Avg. Time (s) $\downarrow$ & Falls $\downarrow$ & Avg. Time (s) $\downarrow$ & Falls $\downarrow$ \\
        \midrule
        Walking & 45.3 & 2.3 & 32 & 1 & NC & NC \\
        HLC & 42.7 & 3 & 28.3 & 1.3 & 62.3 & 2.7 \\
        \ours (ours) & \textbf{25.7} & \textbf{0.7} & \textbf{24.3} & \textbf{0.3} & \textbf{27.3} & \textbf{1} \\
        \bottomrule
    \end{tabular}
    }
    \caption{\small \textbf{Results on the real Go1 robot on 3 different tasks}: On all 3 tasks, across 3 trials for each method, \ours significantly outperforms both comparisons in terms of both average wall clock time (s) and number of falls or readjustments needed to complete the task in a single life. NC (no complete) indicates that the task was not able to be successfully completed with the given method. }
    \label{tab:real_results}
\end{table*}

\textbf{Setup.} On the Go1 quadruped robot, we evaluate \ours in a setting where we have a fixed set of five prior behaviors: walking, and four different behaviors where each of the legs has a joint frozen. 
We pre-train a base walking behavior in 18k steps and train the other behaviors by fine-tuning the walking behavior for an additional 3k steps with one of the joints frozen, all from scratch in the real world using the system from \citep{smith2022walk}.
During single-life deployment, we evaluate on the following three tasks: (1) Heavy Luggage: the robot must walk from a starting line to a finish line, while pulling a box that is 6.2 kg (13.6 lb) attached to one of the back legs. 
(2) Dynamic Luggage Load: the robot must walk from a starting line to a finish line, while adapting on-the-fly to a varying amount of weight between 2.36 kg (5.2 lb) and 4.2 kg (9.2 lb). We standardize each trial by adding and removing weight at the same distance from the start position. (3) Roller Skates: we fit the robot's front two feet into roller skates, and the robot must adapt to its behavior to slide its forward legs and push off its back legs in order to walk to the end line. 
We report the average wall clock time in seconds and number of falls or readjustments needed to complete the task in a single life across 3 trials for each method.
The tasks are shown in Figure~\ref{fig:real-tasks}, along with the locations of the robot before its first fall or readjustment for each method for each trial.

\textbf{Results.} Although none of the prior behaviors are trained to handle these specific test-time scenarios, the robot can leverage parts of the prior behaviors to complete the task.
As shown in Table~\ref{tab:real_results},\ours significantly outperforms using a high-level classifier (HLC) as well as the baseline walking policy in terms of both average wall clock time and number of falls or readjustments at single-life time on all three real-world tasks.
Qualitatively, the other methods have trouble pulling luggage consistently forward, whereas our method often chooses the behavior where a joint is frozen on the leg with the luggage attached, as this behavior uses the robot's other three legs to pull itself forward more effectively.
The other methods struggle particularly on the roller skates task, which has drastically different dynamics from typical walking and especially relies on choosing relevant behaviors that heavily use the back legs. 
As seen in Figure~\ref{fig:real-tasks}, for all three tasks, HLC and the standard walking policy often fall or need to be readjusted very early in each single-life trial, whereas \ours gets much closer to the finish line and often even completes the task without any falls or readjustments.

\subsection{Empirical Analysis of \ours}
\label{sec:app-emp-analysis}

The cross-entropy term is a regularizer that creates a preference for skills that visit a given state more frequently. However, this is not the only criterion for selecting a skill; it is a regularizer. A skill with higher value is still preferred if its visitation frequency is not too low, and ROAM does not exclusively always just select the most high-frequency behavior. We show this with the following experiment with results in Table~\ref{tab:chosen}. In the simulated stiffness suite, we held out most of the data from one of the buffers corresponding to one of the behaviors, leaving only 5k transitions compared to the original 40k, and evaluated the agent at test time in an environment suited for that behavior. We find that even with only 5k transitions (compared to 40k for all other behaviors), ROAM selects this less-frequent but suitable behavior the majority of the time, leading to similar overall performance.

\begin{table}[ht]
\centering
\begin{tabular}{ccc}
\toprule
\# Transitions & \% Timesteps Chosen & Avg \# Steps \\
\midrule
5k & 53.2 & 591.3 \\
40k & 78.4 & 573.8 \\
\bottomrule
\end{tabular}
\caption{\textbf{\ours selects high-value behaviors even with lower visitation frequency.} We find that even with a much smaller buffer, and therefore lower visitation frequency for many states, \ours still chooses that behavior when given situations suitable for it.}
\label{tab:chosen}
\end{table}

We next investigate the sensitivity of the $\beta$ hyperparameter. We ran ROAM with 4 different values (0.01, 0.1, 0.5, 0.9) of $\beta$ in each simulated suite and show the performance in Table~\ref{tab:betas}. For both evaluations, 3 out of 4 of these values (all except 0.01) outperform all the other baselines. 

\begin{table}[t]
\centering
\begin{tabular}{c|cc|cc}
\toprule
\multicolumn{1}{l|}{} & \multicolumn{2}{c|}{Dynamic Friction} & \multicolumn{2}{c}{Dynamic Stiffness} \\
$\beta$ & Avg \# Steps ($\downarrow$) & Frequency of Switching & Avg \# Steps ($\downarrow$) & Frequency of Switching \\
\toprule 
0.01 & 7610 $\pm$ 854 & 17.20\% & 2698 $\pm$ 844 & 2.92\% \\
0.1 & 2082 $\pm$ 382 & 15.63\% & 1331 $\pm$ 263 & 8.25\% \\
0.5 & 772 $\pm$ 179 & 11.85\% & 628 $\pm$ 19 & 12.35\% \\
0.9 & 1466 $\pm$ 534 & 9.36\% & 735 $\pm$ 54 & 13.36\% \\
\bottomrule
\end{tabular}
\caption{\textbf{Sensitivity of $\beta$ and frequency of behavior switching.} We find that a range of $\beta$ values give strong performance.}
\label{tab:betas}
\vspace{-4mm}
\end{table}

Additionally, one benefit of \ours is that the ability to switch between these policies at any timestep allows the agent to adapt to new and unforeseen situations, including those for which no single behavior is optimally suited. However, one hypothetical concern may be that frequent switching of behaviors may lead to suboptimal performance. In Table~\ref{tab:betas}, we measure how often behaviors were switched and tried to see if frequency of behavior switches correlates with failure. We found no such correlation. Below, we show the percent of timesteps where the agent decides to switch behaviors, and more frequent switching does not correlate to a higher average number of steps needed to complete the task.

At test time, \ours provides the option to use the collected online samples to further fine-tune the policies. We run an additional ablation on this aspect of fine-tuning in our simulated tasks. Below in Table~\ref{tab:ft-ablation}, we find that only performing fine-tuning (the RLPD comparison) leads to much slower adaptation. We also add a comparison to ROAM without low-level behavior fine-tuning (ROAM-selection only) in the table below, showing that fine-tuning the low-level behaviors helps but the lion’s share of the efficiency benefit comes from the behavior selection. Thus, the behavior selection mechanism is indeed the main driver of the fast adaptation, but our framework also supports fine-tuning the selected behaviors.

\begin{table}[t]
    \centering
    \begin{tabular}{lcc}
        \toprule
        & Dynamic Friction & Dynamic Stiffness \\
        \midrule
        & Avg \# Steps ($\downarrow$) & Avg \# Steps ($\downarrow$) \\
        \midrule
        RLPD & 8089 $\pm$ 868 & 3797 $\pm$ 1029 \\
        ROAM (selection only) & 1268 $\pm$ 334 & 1565 $\pm$ 339 \\
        ROAM & \textbf{1018 $\pm$ 225} & \textbf{1199 $\pm$ 439} \\
        \bottomrule
    \end{tabular}
    \caption{\textbf{Ablation on Behavior-Level Fine-Tuning.} \ours supports fine-tuning the selected behaviors with the additional collected data. The majority of efficiency benefits come from the behavior selection mechanism, though fine-tuning selected behaviors also improves performance.}
    \vspace{-4mm}
    \label{tab:ft-ablation}
\end{table}

\begin{table}[h]
    \centering
    \begin{tabular}{lcc}
        \toprule
        & Dynamic Friction & Dynamic Stiffness \\
        \midrule
        & Avg \# Steps ($\downarrow$) & Avg \# Steps ($\downarrow$) \\
        \midrule
        ROAM (original, selected skills) & 1018 $\pm$ 225 & 1199 $\pm$ 439 \\
        ROAM (all skills) & 1141 $\pm$ 115 & 871 $\pm$ 137 \\
        \bottomrule
    \end{tabular}
    \caption{\textbf{\ours with irrelevant behaviors.} Adding irrelevant behaviors to the set of prior behaviors does not significantly hurt performance and sometimes improves it, as seen in the stiffness evaluation.}
    \label{tab:skills}
    \vspace{-3mm}
\end{table}

Finally, in our main experiments in Section~\ref{sec:experiments}, we range from choosing among 4-9 different prior behaviors. We run an additional experiment combining all the behaviors from both simulated suites, giving 13 total, and in Table~\ref{tab:skills}, we find that for each suite, adding additional irrelevant behaviors did not significantly hurt performance and can even improve performance (as is the case for the stiffness eval). 

\section{Conclusion and Future Work}
\label{sec:conclusion}

We introduced Robust Autonomous Modulation (\ours), which enables agents to rapidly adapt to changing, out-of-distribution circumstances during deployment. Our contribution lies in offering a principled, efficient way for agents to leverage pre-trained behaviors when adapting on-the-fly. 
On simulated and complex real-world tasks with a Go1 quadruped robot, our method achieves over 2x efficiency in adapting to new situations compared to existing methods. 
While \ours offers significant advances in enabling agents to adapt to out-of-distribution scenarios, one current limitation lies in the dependency on the range of pre-trained behaviors; some scenarios may simply be too far out-of-distribution compared to the available prior behaviors. 
Future work could explore integrating \ours into a lifelong learning framework, allowing agents to continuously expand their repertoire of behaviors and increasing their adaptability to more unforeseen situations.

\newpage
\bibliography{references}
\bibliographystyle{collas2025_conference}

\appendix
\section{Appendix}
\label{sec:app}

\subsection{Theoretical Analysis}
\label{sec:app-theory}

We present the proof of the following theorem, discussed in Section~\ref{sec:method}. 

\begin{theorem}
    Let $p_i(s)$ denote the state visitation probability for a behavior $b_i$ at state $s$. For any state $s$ that is out of distribution for behavior $b_i$ and is in distribution for another behavior $b_j$, i.e. $p_i(s) \ll p_j(s)$, if $1 > \beta > 0$ is chosen to be large enough, then the value of behavior $b_i$ learned by \ours will be bounded above compared to value of behavior $b_j$, i.e.  $V_i(s) \leq V_j(s)$.
    \label{thm-main}
\end{theorem}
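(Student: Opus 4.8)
The plan is to work in the idealized tabular setting described above, treating each value $V_i(s)$ as a free parameter and using the fact that at any local extremum of $\mc{L}_{\text{fine-tune}}$ the partial derivative with respect to $V_i(s)$ vanishes. I would fix the state $s$ in the statement and write down the stationarity condition $(1-\beta)\,\partial_{V_i(s)}\mc{L}_{\text{Bellman}} + \beta\,\partial_{V_i(s)}\mc{L}_{\text{cross-entropy}} = 0$ separately for the two behaviors $b_i$ and $b_j$ of interest, then compare the two equations.

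The key computation is the cross-entropy gradient. Writing $\pi^{\mathrm{sel}}_i(s) = \exp(V_i(s))/\sum_k \exp(V_k(s))$ for the selection probability and $P(s) = \sum_k p_k(s)$ for the total visitation mass at $s$, the softmax derivative already recorded in the body of the paper yields $\partial_{V_i(s)}\mc{L}_{\text{cross-entropy}} = \pi^{\mathrm{sel}}_i(s)\,P(s) - p_i(s)$; that is, the cross-entropy term alone drives the selection probability toward the normalized visitation frequency $p_i(s)/P(s)$. Solving each stationarity condition for $\pi^{\mathrm{sel}}_i(s)P(s)$ and subtracting the equation for $b_j$ from that for $b_i$ gives
\[
\bigl(\pi^{\mathrm{sel}}_i(s) - \pi^{\mathrm{sel}}_j(s)\bigr)P(s) = \bigl(p_i(s) - p_j(s)\bigr) - \frac{1-\beta}{\beta}\bigl(g_i(s) - g_j(s)\bigr),
\]
where $g_i(s) := \partial_{V_i(s)}\mc{L}_{\text{Bellman}}$.

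It then remains to control the Bellman contribution. Taking the full gradient, $g_i(s) = -2p_i(s)\delta_i(s) + 2\gamma\sum_{\tilde s} p_i(\tilde s)\delta_i(\tilde s)P_i(s\mid\tilde s)$ with TD residual $\delta_i(s) = R_i(s) + \gamma\E_{s'}V_i(s') - V_i(s)$. The assumption that the value gaps (equivalently the TD residuals) are bounded by $\eta$ is exactly what lets me bound $|g_i(s) - g_j(s)|$ by a finite constant that does not depend on $\beta$. With such a bound in hand I would finish by choosing $\beta$ close enough to $1$ that $\frac{1-\beta}{\beta}|g_i(s) - g_j(s)| < p_j(s) - p_i(s)$; this is possible because the right-hand side is strictly positive (as $p_i(s)\ll p_j(s)$) while the coefficient $\frac{1-\beta}{\beta}\to 0$. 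The displayed identity then forces $\pi^{\mathrm{sel}}_i(s) - \pi^{\mathrm{sel}}_j(s) < 0$, and since the softmax is strictly monotone at a fixed state (common denominator), $\pi^{\mathrm{sel}}_i(s) < \pi^{\mathrm{sel}}_j(s)$ is equivalent to $V_i(s) < V_j(s)$, giving the claim.

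The main obstacle I anticipate is the third step: the full Bellman gradient couples $V_i(s)$ to residuals at all predecessor states through the bootstrapping term $\gamma\sum_{\tilde s}p_i(\tilde s)\delta_i(\tilde s)P_i(s\mid\tilde s)$, so extracting a clean, $\beta$-independent bound on $g_i(s) - g_j(s)$ is where the $\eta$ assumption has to do the real work. A secondary subtlety is that the threshold on $\beta$ a priori depends on the state through the gap $p_j(s) - p_i(s)$; for states that are genuinely out-of-distribution for $b_i$ this gap is bounded away from zero, so a single choice of $\beta$ suffices.
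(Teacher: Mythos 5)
Your algebra up through the subtraction identity is correct, and it is in fact a dual form of the paper's own argument: the paper writes down the same stationarity condition but solves it for the \emph{value}, obtaining
$V_i(s) = \left(R_i(s) + \gamma\E_{s'}V_i(s')\right) + \frac{\beta}{2(1-\beta)(1-\gamma P_i(s|s,a))}\left[1 - \sum_j \frac{p_j(s)}{p_i(s)}\frac{\exp(V_i(s))}{\sum_{k}\exp(V_k(s))}\right]$
(note it keeps only the self-transition term of the Bellman gradient, which makes the condition state-local), whereas you solve it for the \emph{selection probability}, $\pi^{\mathrm{sel}}_i(s)P(s) = p_i(s) - \frac{1-\beta}{\beta}g_i(s)$. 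The paper then applies the $\eta$ assumption to the difference of Bellman backups, argues the difference of the bracketed cross-entropy corrections is strictly positive (via its claim $\sum_j \frac{p_j(s)}{p_i(s)}\frac{\exp(V_i(s))}{\sum_k\exp(V_k(s))} \gg 1$), and lets the $\frac{\beta}{1-\beta}$ amplification dominate $\eta$. Your closing step (softmax monotonicity at a fixed state) is also fine.

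The genuine gap is your treatment of $g_i(s)-g_j(s)$. The paper's $\eta$ assumption bounds the difference of Bellman backups \emph{across two behaviors}, $\left(R_j(s)+\gamma\E_{s'}V_j(s')\right) - \left(R_i(s)+\gamma\E_{s'}V_i(s')\right)$; it says nothing about the within-behavior TD residuals $\delta_i(s) = R_i(s)+\gamma\E_{s'}V_i(s')-V_i(s)$, and the two are not ``equivalent'' as you assert. Moreover, at a \ours stationary point the residuals are genuinely not $\beta$-independent: stationarity itself reads $g_i(s) = -\frac{\beta}{1-\beta}\left[\pi^{\mathrm{sel}}_i(s)P(s)-p_i(s)\right]$, so the cross-entropy term distorts values away from Bellman consistency by an amount of order $\frac{\beta}{1-\beta}$. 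By this same relation, your desired $\beta$-independent bound $|g_i(s)-g_j(s)|\le C$ is \emph{equivalent} to the statement $\left|\left(\pi^{\mathrm{sel}}_i(s)-\pi^{\mathrm{sel}}_j(s)\right)P(s) - \left(p_i(s)-p_j(s)\right)\right| \le C\,\frac{1-\beta}{\beta}$, i.e.\ that selection probabilities track visitation frequencies at rate $1-\beta$ --- a quantitative strengthening of the very conclusion you are proving. So as written the argument is circular: the key bound is assumed, not derived, and the cited $\eta$ assumption cannot supply it. To repair your route you would need an independent argument that the stationary-point residuals stay bounded as $\beta\to 1$ (e.g.\ a limit argument that the stationary values converge to a Bellman-error minimizer constrained to match softmax probabilities to normalized visitation frequencies); alternatively, follow the paper's route, where $\eta$ is applied to the backup difference --- the quantity it actually bounds --- and the remaining burden is the positivity of the cross-entropy correction difference rather than a residual bound.
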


\begin{proof} 

We optimize the following loss function: 
\begin{align*}
\mc{L}_{\text{fine-tune}} & = 
 (1 - \beta) \sum_i \sum_{s \sim \mc{D}_i} p_i(s)\left[ \left( R_i(s) + \gamma \E_{s'} V_i(s') - V_i(s) \right)^2 \right]
+ \beta \sum_{j} \sum_{s \sim \D_j} p_j(s) \left[ -V_j(s) + \log \sum_{k=1}^n \exp(V_k(s)) \right].
\end{align*}

Taking the derivative with respect to $V_i(s)$, we have:
\begin{align*}
\frac{\partial\mc{L}_{\text{fine-tune}}}{\partial V_i(s)} & = 2(1 - \beta)p_i(s) (\gamma p(s|s, a) - 1)\left(R_i(s) + \gamma \E_{s'} V_i(s') - V_i(s) \right) \\
&+ \beta \left[ p_i(s)\left(-1 + \frac{\exp(V_i(s))}{\sum_{k=1}^n \exp(V_k(s))}\right) + \sum_{j \neq i} p_j(s) \frac{\exp(V_i(s))}{\sum_{k=1}^n \exp(V_k(s))}\right] \\
&= 2(1 - \beta)p_i(s) (\gamma P_i(s|s, a) - 1)\left( R_i(s) + \gamma \E_{s'} V_i(s') - V_i(s) \right) 
+ \beta \left[-p_i(s) + \sum_{j} p_j(s) \frac{\exp(V_i(s))}{\sum_{k=1}^n \exp(V_k(s))}\right].
\end{align*}

Setting to 0, we have the following characterization of any convergence point:
\begin{align*}
V_i(s) & = \left( R_i(s) + \gamma \E_{s'} V_i(s')\right) 
+ \frac{\beta}{2(1 - \beta)p_i(s) (\gamma P_i(s|s, a) - 1)} \left[-p_i(s) + \sum_{j} p_j(s) \frac{\exp(V_i(s))}{\sum_{k=1}^n \exp(V_k(s))}\right] \\
&= \left( R_i(s) + \gamma \E_{s'} V_i(s')\right) 
+ \frac{\beta}{2(1 - \beta)(1 - \gamma P_i(s|s, a))} \left[1 - \sum_{j} \frac{p_j(s)}{p_i(s)} \frac{\exp(V_i(s))}{\sum_{k=1}^n \exp(V_k(s))}\right].
\end{align*}

Consider a state s where $p_{\text{freq}}(s) \gg p_i(s)$ for some behavior $b_{\text{freq}} \neq b_i$. We want to show that in such a state $V_{freq}(s) > V_i(s)$. 
Then $\frac{p_{\text{freq}}(s)}{p_i(s)}$ will dominate in the last term, so 
\begin{equation}
    \sum_{j} \frac{p_j(s)}{p_i(s)} \frac{\exp(V_i(s))}{\sum_{k=1}^n \exp(V_k(s))} >> 1.
    \label{eq:assumption}
\end{equation}

Then comparing the convergence point values of $V_i(s)$ and $V_{\text{freq}}(s)$, we have
\begin{align*}
    V_{\text{freq}}(s) - V_i(s) &= \left( R_{\text{freq}}(s) + \gamma \E_{s'} V_{\text{freq}}(s')\right) 
+ \frac{\beta}{2(1 - \beta)(1 - \gamma P_{\text{freq}}(s|s, a))} \left[1 - \sum_{j} \frac{p_j(s)}{p_{\text{freq}}(s)} \frac{\exp(V_{\text{freq}}(s))}{\sum_{k=1}^n \exp(V_k(s))}\right] \\
&- \left[\left( R_i(s) + \gamma \E_{s'} V_i(s')\right) 
+ \frac{\beta}{2(1 - \beta)(1 - \gamma P_i(s|s, a))} \left[1 - \sum_{j} \frac{p_j(s)}{p_i(s)} \frac{\exp(V_i(s))}{\sum_{k=1}^n \exp(V_k(s))}\right] \right] \\
&= (\left( R_{\text{freq}}(s) + \gamma \E_{s'} V_{\text{freq}}(s')\right) - \left( R_i(s) + \gamma \E_{s'} V_i(s')\right)) \\
&+ \frac{\beta}{1 - \beta} \left(C_{\text{freq}} \left[1 - \sum_{j} \frac{p_j(s)}{p_{\text{freq}}(s)} \frac{\exp(V_{\text{freq}}(s))}{\sum_{k=1}^n \exp(V_k(s))}\right] - C_i\left[1 - \sum_{j} \frac{p_j(s)}{p_i(s)} \frac{\exp(V_i(s))}{\sum_{k=1}^n \exp(V_k(s))}\right] \right),
\end{align*}
where $C_{\text{freq}} = \frac{1}{2(1 - \gamma P_{\text{freq}}(s|s, a))} > 0$ and $C_i = \frac{1}{2(1 - \gamma P_i(s|s, a))} > 0$.
Thus, using Equation~\ref{eq:assumption}, because  
\[1 - \sum_{j} \frac{p_j(s)}{p_i(s)} \frac{\exp(V_i(s))}{\sum_{k=1}^n \exp(V_k(s))} << 0,
\]
the term 
\[\left(C_{\text{freq}} \left[1 - \sum_{j} \frac{p_j(s)}{p_{\text{freq}}(s)} \frac{\exp(V_{\text{freq}}(s))}{\sum_{k=1}^n \exp(V_k(s))}\right] - C_i\left[1 - \sum_{j} \frac{p_j(s)}{p_i(s)} \frac{\exp(V_i(s))}{\sum_{k=1}^n \exp(V_k(s))}\right] \right) > 0. \]
Hence, for some $0 < \beta < 1$, we have
\begin{align*}
V_{\text{freq}}(s) - V_i(s) &=(\left( R_{\text{freq}}(s) + \gamma \E_{s'} V_{\text{freq}}(s')\right) - \left( R_i(s) + \gamma \E_{s'} V_i(s')\right)) \\
&+ \frac{\beta}{1 - \beta} \left(C_{\text{freq}} \left[1 - \sum_{j} \frac{p_j(s)}{p_{\text{freq}}(s)} \frac{\exp(V_{\text{freq}}(s))}{\sum_{k=1}^n \exp(V_k(s))}\right] - C_i\left[1 - \sum_{j} \frac{p_j(s)}{p_i(s)} \frac{\exp(V_i(s))}{\sum_{k=1}^n \exp(V_k(s))}\right] \right) > 0.
\end{align*}
Thus, for some for some $0 < \beta < 1$, we have $V_i(s) < V_{\text{freq}}(s)$.

\end{proof}

To illustrate the effect of the cross-entropy loss, consider the following example of choosing between two behaviors $i$ and $j$ at state $s$, where the true values $V^{\text{true}}_i(s) < V^{\text{true}}_j(s)$. The optimal choice is to choose behavior $b_j$ and for sake of this example let us choose behavior $b_j$ if $V^{\text{\ours}}_i(s) < V^{\text{\ours}}_j(s)$. There are the following four cases: (1) $p_i(s) < p_j(s)$ and the initial estimated $V_i(s) < V_j(s)$. Then with any $\beta > 0$, the final $V^{\text{\ours}}_i(s) < V^{\text{\ours}}_j(s)$; (2) $p_i(s) < p_j(s)$ and the initial estimated $V_i(s) > V_j(s)$. Then by Theorem~\ref{thm-main}, with large enough $\beta > 0$, the final $V^{\text{\ours}}_i(s) < V^{\text{\ours}}_j(s)$; (3) $p_i(s) > p_j(s)$ and the initial estimated $V_i(s) < V_j(s)$. Then as long as $\beta$ is chosen to be not too large, the final $V^{\text{\ours}}_i(s) < V^{\text{\ours}}_j(s)$. (4) $p_i(s) > p_j(s)$ and the initial estimated $V_i(s) > V_j(s)$. This is the only case where \ours may not be adjusted to work well, but this case poses a difficult situation for any behavior selection method. 

\subsection{Algorithm Summary}
\label{sec:app-method}
We summarize \ours in Algorithms~\ref{algoblock1} and ~\ref{algoblock2}.

\begin{figure*}[h]
\begin{minipage}[t]{0.4\textwidth}
\begin{algorithm}[H]
\caption{\small \textsc{\ours Fine-Tuning}}
\begin{algorithmic}[1]
\STATE \textbf{Require:} $\mc{D}_i$, pre-trained critics $Q_{i, \text{orig}}$
\WHILE{\text{not converged}}
    \FORALL{$i$ in 1, ..., $N_\text{behaviors}$}
    \STATE Sample $(s, a, s', r_{\text{target}}) \sim D_i$
    \STATE Update $Q_i$ according to Eq.~\ref{eq:pretrain}
    \ENDFOR
\ENDWHILE
\STATE return $Q_1, ..., Q_{N_\text{behaviors}}$
\end{algorithmic}
\label{algoblock1}
\end{algorithm}
\end{minipage}
\noindent
\hfill
\begin{minipage}[t]{0.59\textwidth}
\begin{algorithm}[H]
\caption{\small \textsc{\ours Single-Life Deployment}}
\begin{algorithmic}[1]
\STATE \textbf{Require:} Test MDP $\mc{M}_{\text{test}}$, $\mc{D}_i$, policies $\pi_i$ and fine-tuned critics $Q_i$; 
\STATE \textbf{Initialize:} online replay buffers $\mc{D}^i_{\text{online}}$; timestep $t=0$
\WHILE{\text{task not complete}}
    \STATE Compute values of each behavior $\{V_i(s_t)\}_1^{N_\text{behaviors}}$ 
    \STATE Sample behavior $b^*$ according to the distribution softmax($\exp(V_i(s_t))$).
    \STATE Take action $a_t \sim \pi_{b^*}(a_t | s_t)$.
    \STATE $\mc{D}^{b^*}_{\text{online}} \leftarrow \mc{D}^{b^*}_{\text{online}} \cup \{(s_t, a_t, r_t, s_{t+1})\}$ 
    \STATE $Q_{b^*, \text{orig}}(s, a), \pi_{b^*} \leftarrow \textsc{RL}(Q_{b^*, \text{orig}}(s, a), \pi_{b^*}, \mc{D}^{b^*}_{\text{online}})$
    \STATE Increment $t$
\ENDWHILE{}
\end{algorithmic}
\label{algoblock2}
\end{algorithm}
\end{minipage}
\vspace{-.25cm}
\end{figure*}

\subsection{General Experiment Setup Details}
\label{sec:app-expsetup}

As common practice in learning-based quadrupedal locomotion works, we define actions to be PD position targets for the 12 joints, and we use a control frequency of 20 Hz. Actions are centered around the nominal pose, i.e. 0 is standing. We describe the observations for the simulated and real-world experiments below.

We first detail the reward function we use to define the quadrupedal walking task. First, we have a velocity-tracking term defined as follows:
\[
    r_v(s, a) = 1 - |\frac{v_x - v_t}{v_t}|^{1.6}
\]
where $v_t$ is the target velocity and $v_x$ is the robot's local, forward linear velocity projected onto the ground plane, i.e., $v_x = v_\text{local} \cdot \cos(\phi)$ where $\phi$ is the root body's pitch angle. We then have a term $r_{ori}(s, a)$ that encourages the robot to stay upright. Specifically, we calculate the cosine distance between the 3d vector perpendicular to the robot's body and the gravity vector ($[0, 0, 1]$). We then normalize the term to be between 0 and 1 via:
\[
    r_{ori}(s, a) = (0.5\cdot\texttt{dist} + 0.5)^2
\]
where $\texttt{dist}$ is the cosine distance as described above.
We multiply $r_v$ and $r_{ori}$ so as to give reward for tracking velocity proportionally to how well the robot is staying upright.
We then have a regularization term $r_{qpos}$ to favor solutions that are close to the robot's nominal standing pose. This regularization term is calculated as a product of a normalized term per-joint. Below, $\hat{q}^j$ represents the local rotation of joint $j$ of the nominal pose, and $q^j$ represents the robot's joint,
\[
    r^\text{qpos} = 1 - \prod_j \texttt{q distance}(\hat{q}^j, q^j)
\]
where $\texttt{q distance}$ is between 0 and 1 and decays quadratically until a threshold which is the robot's action limits. Specifically, we follow the reward structure put forth by \citep{tunyasuvunakool2020}.
These terms comprise the overwhelming majority of weight in the final reward. We also include terms for avoiding undesirable behaviors like rocking or swaying that penalize any angular velocity in the root body's roll, pitch, and yaw. We also slightly penalize energy consumption and torque smoothness. To encourage a walking gait in particular, we added another regularization term to encourage diagonal shoulder and hip joints to be the same at any given time.

\subsection{Implementation Details and Hyperparameters}
\label{sec:app-hyperparams}
We implemented all methods, including ROAM, RMA, and HLC, on top of the same state-of-the art implementation of SAC from \citep{smith2022walk} as the base learning approach. For all comparisons, we additionally use a high UTD ratio, dropout, and layernorm, following DroQ~\citep{hiraoka2021dropout}, and for methods that do online fine-tuning, we use 50/50 sampling following RLPD~\citep{ball2023rlpd}.
We use default hyperparameter values: a learning rate of $3 \times 10^{-4}$, an online batch size of $128$, and a discount factor of $0.99$.
The policy and critic networks are MLPs with 2 hidden layers of 256 units each and ReLU activations. For \ours, we tuned $\beta$ with values 0.01, 0.1, 0.5, 0.9.

\paragraph{Simulated Experiments.}
For the simulated experiments, the state space consists of joint positions, joint velocities, torques, IMU (roll, pitch, change in roll, change in pitch), and normalized foot forces for a total of $44$ dimensions. For the position controller, we use $K_p$ and $K_d$ gains of $40$ and $5$, respectively, and calculate torques for linearly interpolated joint angles from current to desired at 500Hz. We define the limits of the action space to be 30\% of the physical joint limits.

\begin{table}[h]
    \centering
    \caption{Simulated Reward Function Parameter Details}
    \begin{tabular}{|l|l|}
        \hline
        \multicolumn{1}{|c|}{Parameter} & \multicolumn{1}{c|}{Value} \\
        \hline
        Target Velocity & 1.0 \\
        Energy Penalty Weight & 0.008 \\
        Qpos Penalty Weight & 10.0 \\
        Smooth Torque Penalty Weight & 0.005 \\
        Pitch Rate Penalty Factor & 0.6 \\
        Roll Rate Penalty Factor & 0.6 \\
        Joint Diagonal Penalty Weight & 0.1 \\
        Joint Shoulder Penalty Weight & 0.15 \\
        Smooth Change in Target Delta Yaw Steps & 5 \\
        \hline
    \end{tabular}
\end{table}

For the first experimental setting, we train prior behavior policies with high stiffness ($10.0$) in $9$ different individual joints. Specifically, we use the front right body joint, the front right knee joint, the front left body joint, the front left knee joint, the rear right body joint, the rear right knee joint, the rear left body joint, the rear left thigh joint, and the rear left knee joint. During deployment, we switch between $3$ conditions every $100$ steps. Condition 1 is applying stiffness $15.0$ to the rear right thigh joint, condition 2 is applying stiffness $15.0$ to the front left thigh joint, and condition 3 is applying stiffness $15.0$ to the front right thigh joint. For this setting, we use $\beta=0.5$ for \ours.

For the second experimental setting, we train prior behavior policies with low foot friction ($0.4$) in each of the $4$ feet. During deployment, we switch between $2$ conditions every $50$ steps. Condition 1 is applying a foot friction of $0.1$ to the rear right foot and condition 2 is applying a foot friction of $0.01$ to the front left foot and a foot friction of $0.1$ too the rear right foot. For this setting, we use $\beta=0.5$ for \ours.

\paragraph{Real-world Experiments.}
For the real-world experiments, the state space consists of joint positions, joint velocities, torques, forward linear velocity, IMU (roll, pitch, change in roll, change in pitch), and normalized foot forces for a total of $47$ dimensions. We use an Intel T265 camera-based velocity estimator to estimate onboard linear velocity. We use $K_p$ and $K_d$ gains of $20$ and $1$, respectively, which are used in the position controller. We again use action interpolation, an action range of 35\% physical limits, and a $1$ step action history. We also use a second-order Butterworth low-pass filter with a high-cut value of $8$ to smooth the position targets. Finally, to reset the robot, we use the reset policy provided by \citep{smith2022walk}. We train $4$ prior behavior policies for the real-world experiments, each of which is trained with a frozen knee joint. Specifically, we train a policy with the front right knee joint frozen, the front left knee joint frozen, the rear right knee joint frozen, and the rear left knee joint frozen. $\beta=0.5$ is used in all real-world experiments for \ours.

\begin{table}[h]
    \centering
    \caption{Real-world Reward Function Parameter Details}
    \begin{tabular}{|l|l|}
        \hline
        \multicolumn{1}{|c|}{Parameter} & \multicolumn{1}{c|}{Value} \\
        \hline
        Target Velocity & 1.5 \\
        Energy Penalty Weight & 0.0 \\
        Qpos Penalty Weight & 2.0 \\
        Smooth Torque Penalty Weight & 0.005 \\
        Pitch Rate Penalty Factor & 0.4 \\
        Roll Rate Penalty Factor & 0.2 \\
        Joint Diagonal Penalty Weight & 0.03 \\
        Joint Shoulder Penalty Weight & 0.0 \\
        Smooth Change in Target Delta Yaw Steps & 1 \\
        \hline
    \end{tabular}
\end{table}

\paragraph{HLC Details.}
For HLC, we have an MLP that takes state as input and outputs which behavior to select in the given state. The MLP has 3 hidden layers of $256$ units each and ReLU activations, and we train by sampling from the combined offline data from all prior behaviors. We use a batch size of $256$, learning rate of $3 \times 10^{-4}$, and train for $3,000$ iterations.

\paragraph{RMA Details.}
For RMA training, we changed the environment dynamics between each episode and trained for a total of $2,000,000$ iterations. The standard architecture and hyperparameter choices from \citep{kumar2021rma} were used along with DroQ~\citep{hiraoka2021dropout} as the base algorithm.

\end{document}